\newcommand\vx{\mathbf{x}}
\newcommand\true{\mathit{true}}
\newcommand\false{\mathit{false}}
\newcommand\nx{\mathsf{next}}
\newtheorem{definition}{Definition}
\newtheorem{example}{Example}
\newtheorem{lemma}{Lemma}
\newcommand\scr[1]{\mathcal{#1}}
\newcommand\cost{\scr{C}}
\newcommand\hole{\tikz[anchor=base,baseline] \draw (0,0) node[fill=gray!20, draw=black]{?}; }
\newcommand{\Pp}{\mathbb{P}}
\begin{document}
\thispagestyle{empty}
\pagestyle{empty}

\title{Predictive Runtime Monitoring for Mobile Robots using Logic-Based Bayesian Intent Inference}
\author{Hansol Yoon and Sriram Sankaranarayanan~\thanks{Department of Computer Science, University of Colorado Boulder, USA, {\tt\small \{firstname.lastname\}@colorado.edu}}}

\maketitle

\begin{abstract}
  We propose a predictive runtime monitoring framework that forecasts the
  distribution of future positions of mobile robots in order to detect and avoid  impending property violations such as collisions with obstacles or other agents. Our approach uses a restricted class of temporal logic formulas to   represent the likely intentions of the agents along with a combination of  temporal logic-based optimal cost path planning and Bayesian inference to
  compute the probability of these intents given the current trajectory of the
  robot. First, we construct a large but finite hypothesis space of possible intents  represented as temporal logic formulas whose atomic propositions are derived from a detailed map of the robot's workspace. 
  Next, our approach uses real-time observations
  of the robot's position to update a distribution over temporal logic formulae
  that represent its likely intent. This is performed by using a combination of  optimal cost path
  planning and a Boltzmann noisy rationality model. In this manner, we construct a Bayesian approach to evaluating the posterior probability of various hypotheses given the observed states and actions  of the robot. Finally, we predict the  future position of the robot by drawing posterior predictive samples using a Monte-Carlo method. We evaluate our framework using two different trajectory  datasets that contain multiple scenarios implementing various  tasks. The results show that our method can predict future positions precisely and efficiently, so that the computation time for generating a prediction is a tiny fraction of the overall time
  horizon.
\end{abstract}

\section{INTRODUCTION}\label{sec:introduction}
Detecting and preventing imminent property violations is an important problem
for the safe operation of autonomous robots in highly dynamic environments. Such
violations include collisions between multiple robots, failure to respond to
events or robots entering restricted areas. 
Detecting such violations at design time is often impractical: behaviors are dependent 
on possible environmental conditions. The  space of
possible behaviors is too large, or may not be completely known to the
designers. Thus, runtime monitoring approaches have recently gained popularity. However,
these approaches require a model of the robot's motion to predict its future
position. Recent approaches have employed such models to detect the possible
positions that a robot can reach in the near future using physics-based dynamic
models and reachability analysis~\cite{althoff2014online,liu2017provably,
koschi2020set, chou2020predictive}. Similarly, a pattern-based approach predicts
future positions based on historical data and predicts likely future
positions~\cite{peddi2020datadriven} (Cf. ~\cite{rudenko2020human} for a survey
of trajectory prediction for dynamic agents).

However, forecasting future moves by extrapolating the past trajectories is
often likely to fail unless we also have a specification of the task (or current
subtask) that a robot is performing. In this paper, we term this as the robot's
(current/short term) \emph{intent}. This approach assumes that the robot has
a high-level mission or intent. Furthermore, the robot is assumed to 
choose an ``efficient'' plan for implementing the mission. The efficient path
can be either the shortest path or the \emph{almost} shortest path. In many
scenarios, this assumption is reasonable since operators want robots to
implement more missions with limited resources. Therefore, if a robot does not
choose an efficient strategy for completing a task, we can deduce that either the robot is not
rational or that our current model of the robot's goals are incorrect~\cite{best2015bayesian, ahmad2016bayesian, hwang2008intent,
yepes2007new, Fisac2018Probabilistically, fridovich2020confidence,
bajcsy2019scalable}.

\begin{figure}[t]
\begin{center}
  \begin{tabular}{cc}
\begin{tikzpicture}
\node (n0) at (0,0)
    {\includegraphics[width=.47\textwidth]{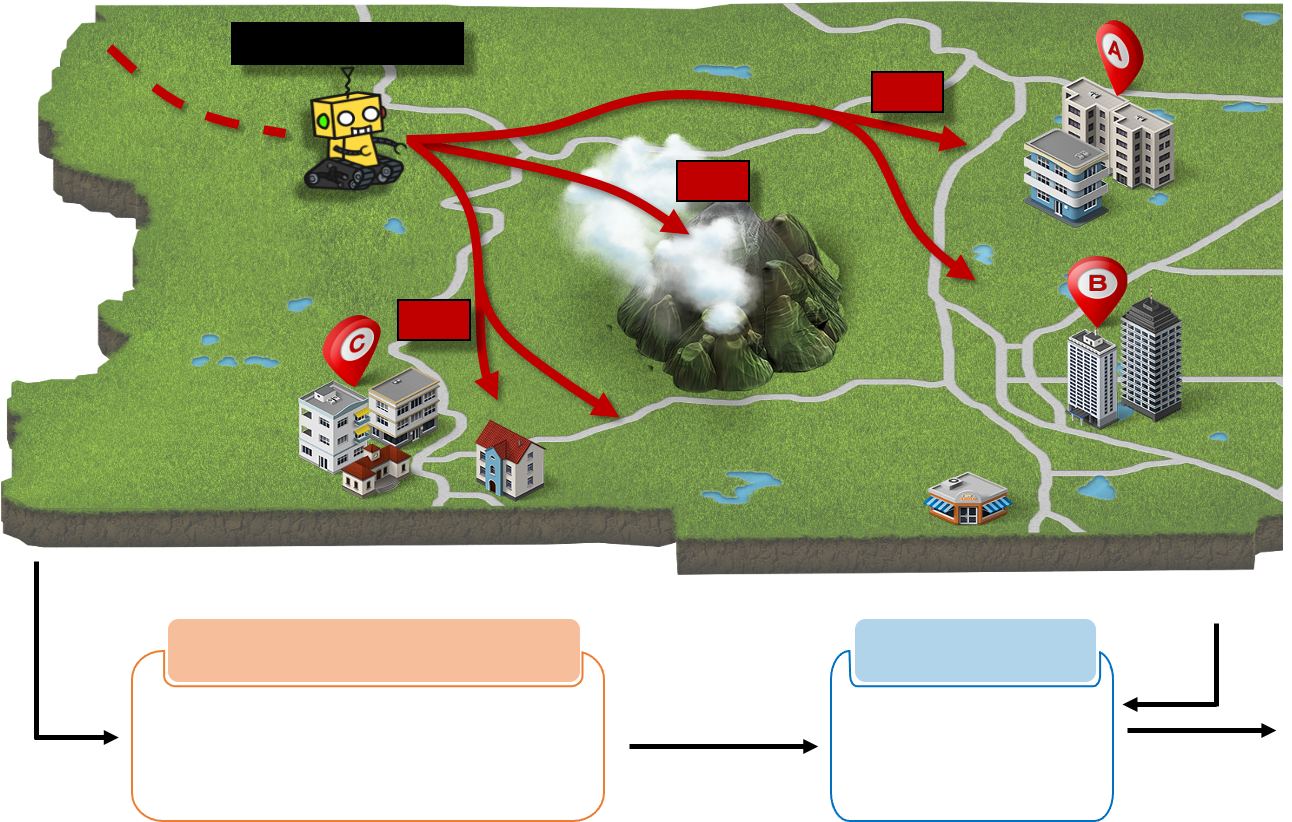}};
\node (n1) at (-1.75,-1.57) [font=\fontsize{7.2}{0}\selectfont] {\textbf{Hypothesis Generation}};
\node (n2) at (-1.8,-1.97) [font=\fontsize{6.5}{0}\selectfont] {H1: Avoid mountain, go to A};
\node (n3) at (-2.58,-2.2) [font=\fontsize{6.5}{0}\selectfont] {H2: Go to B};
\node (n4) at (-2.58,-2.45) [font=\fontsize{6.5}{0}\selectfont] {H3: Go to C};
\node (n5) at (-3.45,-1.05) [font=\fontsize{6.5}{0}\selectfont] {Env. Info};
\node (n6) at (0.5,-1.75) [font=\fontsize{6.5}{0}\selectfont] {Hypotheses};
\node (n7) at (0.5,-1.99) [font=\fontsize{6.5}{0}\selectfont] {Set};
\node (n8) at (2.15,-1.55) [font=\fontsize{7.2}{0}\selectfont] {\textbf{Evaluation}};
\node (n9) at (2.15,-1.97) [font=\fontsize{6.5}{0}\selectfont] {H1: 45\%};
\node (n10) at (2.15,-2.21) [font=\fontsize{6.5}{0}\selectfont] {H2: 35\%};
\node (n11) at (2.15,-2.45) [font=\fontsize{6.5}{0}\selectfont] {H3: 20\%};
\node (n12) at (3.53,-1.2) [font=\fontsize{6.5}{0}\selectfont] {Observation};
\node (n13) at (3.59,-2.26) [font=\fontsize{6.5}{0}\selectfont] {Trajectory};
\node (n13) at (3.59,-2.49) [font=\fontsize{6.5}{0}\selectfont] {Prediction};

\node (n14) at (-1.91,2.39) [font=\fontsize{6.5}{0}\selectfont, color=white] {Mobile Robot};
\node (n15) at (1.7,2.07) [font=\fontsize{6.5}{0}\selectfont, color=white] {H1};
\node (n16) at (0.44,1.5) [font=\fontsize{6.5}{0}\selectfont, color=white] {H2};
\node (n17) at (-1.37,0.59) [font=\fontsize{6.5}{0}\selectfont, color=white] {H3};
\end{tikzpicture}
  \end{tabular}
\end{center}
\caption{The proposed Bayesian intent inference approach first generates a hypotheses set (H1-H3) and uses the robot's recent positions to update the probability of various hypothesized intents (H1-H3). Finally, we can predict likely future positions of the robot using the inferred distribution over possible intents.}\label{fig:intro-illustration}
\end{figure}

In this paper, we use the robot’s intent information for predictive runtime
monitoring. Our assumption is that, if a robot has a high-level mission, we are
able to (1) not only infer the mission by observing its behavior (2) but also
use the information to predict future positions. Therefore, the ability to find
the intent is key in our work.

To that end, we use temporal logic formulas to represent the intent. Temporal
logics have been quite popular for specifying missions in a precise manner and
generating efficient plans for carrying them out~\cite{bhatia2010sampling,
bhatia2010motion, fainekos2009temporal, kress2009temporal, humphrey2014formal,
vasile2014reactive, ulusoy2013optimality}. Temporal logics have been
demonstrated as suitable reprentations of complex real-world missions such as
surveillance and package delivery~\cite{lj2019priority,humphrey2014formal,
ferri2020cooperative, choudhury2020efficient}. We identify a subset of temporal
logic formulas corresponding to the safety and guarantee formulas in the
Manna-Pnueli hierarchy of temporal logic
formulas~\cite{Manna+Pnueli/1989/Hierarchy}. Such formulas can be satisfied or
violated by a finite prefix of an infinite sequence of actions and thus quite
suitable as representations of ``near-term''/``immediate'' intents suitable for
finite time horizon predictions of the robot's position. The Bayesian intent
inference framework then generates a finite set of possible intents using given
patterns of temporal logic formulas and places a prior distribution on these
formulas to represent the probability that a given formula represents the
robot's intent. Next, we use a model of ``noisy rationality'' to provide a
probability that a robot takes a given action in the workspace given its true
intent. This model compares the cost of the action and the most efficient path
from the resulting state to the overall goal of the intent against other
possible actions. We use temporal logic planning techniques based on converting
formulas to automata and solving shortest path problems to compute these costs.

Temporal logic specification inference from observation data have been studied
widely in the recent past~\cite{shah2018bayesian, kim2019bayesian,
vazquez2020maximum, vazquez2018learning}. The main difference from our work is
that they assume the entire trajectory is available at once, whereas we use the
parts of the trajectory. Furthermore, our approach uses intents as a means to
perform predictions of future positions.  

We evaluate our framework on two datasets: a probabilistic roadmap simulation
dataset, wherein we use the popular PRM planning technique to generate motion
plans for some tasks while using our intent inference technique to predict the
intents and future positions without knowledge of the overall mission plan. A
second data set consists of trajectories of humans inside a room, called
T{\"H}OR~\cite{thorDataset2019}: here we are provided noisy position
measurements with unknown intents. Thus, both datasets include a moving agent
implementing various subtasks on the way to a goal, which is unknown to our
monitor. The results show that our method can predict future positions with high
accuracy, and all computations can be implemented in real-time.

The contributions of this paper are as follows:
\begin{compactenum}
\item We introduce a Bayesian intent inference framework leveraging an intent information of a robot. The framework computes the probability distribution of all possible intents written in LTL. 
\item Using the outputs of the framework, we can effectively carry out predictive monitoring that can be used in many robotic applications.
\item All computations can be implemented with sufficient efficiency to enable real-time monitoring.
\end{compactenum}

To the best of our knowledge,  this work is the first attempt to use a logic-based Bayesian intent inference for predictive monitoring.  

\section{Problem Formulation}\label{sec:problem_formulation}
Central to our framework is a ``map'' of the robot's workspace that is
discretized into finitely many cells. Each cell is labeled with an atomic
proposition that characterizes the attributes of the cell. We use the
mathematical model of a \emph{weighted finite transition system} to capture the
map (or the workspace) of the robot.

\begin{definition}[Weighted Finite Transition System] 
A weighted finite transition system $\mathcal{T}$ is a tuple $(C, R, \Pi,
L, \omega)$ wherein $C$ is a finite set of cells,
$R \subseteq C \times C$ is the transition relation that
represents all allowable moves from one cell to the next by the robot, $\Pi$ is a set of boolean atomic
propositions, $L : C \to 2^{\Pi}$ is a labeling function that associates each cell $c \in C$
with a set of atomic propositions $L(c)$, and $\omega : R \to
\mathbb{R}_{\geq 0}$ maps each edge in $R$ to a
non-negative weight.
\end{definition}

Therefore, the position of a robot at time $t$ can be defined as a cell
$\mathbf{x}_t \in C$. Atomic propositions  label attributes/features
such as \emph{airport}, \emph{fire}, \emph{mountain}, and so on (see Fig.~\ref{fig:intro-illustration}).
A \emph{path} in $\scr{T}$ is an infinite sequence of cells $p=c_0 c_1
c_2 \cdots$ such that $c_i \in C$ and $(c_i, c_{i+1}) \in R$ for each
$i \in \mathbb{N}$.

\paragraph{Linear Temporal Logic}
In this paper, we assume that a robot has a high-level mission to implement
before going to a goal location. For example, ``$\mathbf{H_1}$:\ Visit $\pi_1, \pi_2$, and
$\pi_3$ in some order'', or ``$\mathbf{H_2}$:\ Visit $\pi_3$ while avoiding $\pi_5$''. To formally
express such requirements, we use linear temporal logic (LTL) whose grammar is
defined as follows:
$$ \varphi ::= \true\, |\, \false\, |\, \pi \in \Pi\, |\, \neg \varphi\,
|\, \varphi \wedge \varphi\, |\, \Circle \varphi \,
|\, \varphi\, \mathcal{U}\, \phi \,.$$ In addition, two temporal
operators, \emph{eventually} ($\lozenge \varphi:\ \true\,\mathcal{U}\,\varphi$)
and \emph{globally} ($\square \varphi:\ \neg\lozenge\neg\varphi$) can be
derived. The formula $\square \varphi$ is satisfied if $\varphi$ holds for all
time and $\lozenge \varphi$ is satisfied if eventually at some point in time
$\varphi$ is satisfied. We refer the reader to standard texts for a detailed
description of temporal logic and its
applications~\cite{Manna+Pnueli/92/Temporal,Baier+Katoen/2008/Principles}. Using
LTL, we can express the mission
$\mathbf{H_1}:\ \lozenge \pi_1 \wedge \lozenge \pi_2 \wedge \lozenge \pi_3$ and
$\mathbf{H_2}:\ \lozenge \pi_3 \wedge \square \neg \pi_5$. Using LTL is
beneficial because it is capable of describing complex missions clearly although
some fundamental properties like \emph{safety} ($\square \neg \varphi$)
and \emph{reachability} ($\lozenge \varphi$) are mostly used for robot missions
in many scenarios, and because it enables us to use \emph{temporal logic motion
planning}~\cite{bhatia2010sampling, bhatia2010motion, fainekos2009temporal,
kress2009temporal, humphrey2014formal, vasile2014reactive,
ulusoy2013optimality}.

\paragraph{Assumptions}
We assume full knowledge of the transition system $\mathcal{T}$ is available at
any time. Also, if the map is updated in the case of dynamic scenarios, the new
information is assumed to be available immediately. On the other hand, the
robot's mission is assumed to be unknown but expressible as a temporal logic
formula involving atomic propositions in the map.

In this paper, we investigate two problems --- intent inference and predictive monitoring. Fig.~\ref{fig:framework}
shows how these problems relate to each other in our proposed framework.

  \noindent \textbf{Intent Inference:} Given a transition system $\mathcal{T}$
  and the recent history of robot cells at time $t$,
  $\mathbf{x}_{t},\mathbf{x}_{t-1}, \cdots, \mathbf{x}_{t-h}$, we wish to infer
  a distribution of likely intents$\{ (\varphi_1, p_1), \ldots, (\varphi_n,
  p_n) \}$, wherein $\varphi_i$ is a temporal logic formula involving atomic
  propositions $\Pi$, and $p_i \geq 0$ is its associated probability with
  $\sum_{i=1}^n p_i = 1$.

  \noindent \textbf{Predictive Monitoring:} Given a distribution over intents,
  we wish to compute a distribution of future positions $\mathbf{x}_{t+k}$ at
  time $t+k$. At time $t+1$, our approach receives new robot position
  $\vx_{t+1}$, requiring updates to the intents, and the predicted future cell.
  This update needs to be computed in time that is much smaller
  than the overall sampling time.

\section{Bayesian Intent Inference}\label{sec:framework}
We first introduce our Bayesian approach to solve the intent inference problem.
The idea of our approach is to generate possible intents as our
\emph{hypotheses} and evaluate their probabilities using Bayesian inference (see
Fig.~\ref{fig:framework}).

\begin{figure}[t]
\begin{center}
\begin{tikzpicture}[font=\small]


\draw (-4.2,-3) -- (-4.2,2) -- (-1, 2) -- (-1,-3) -- cycle;
\node (n1) at (-2.6, 2.3) {\textbf{Hypothesis Generation}};

\draw [fill=green!20!white, draw=none] (-3.75,0.5) -- (-3.75,1.5) -- (-1.25, 1.5) -- (-1.25,0.5) -- cycle;
\node (n2) at (-2.5, 1.2) {Transition System};
\node (n3) at (-2.5, 0.8) {$\mathcal{T}$};
\node (n4) at (-2.9, 1.7) {Environment};
\draw (-3.75,1) node[inner sep=0pt, minimum size=0pt](pt_t) {};
\draw (-4,1) node[inner sep=0pt, minimum size=0pt](pt_tl) {};

\draw [fill=green!20!white, draw=none] (-3.75,-1.1) -- (-3.75,-0.1) -- (-1.25, -0.1) -- (-1.25,-1.1) -- cycle;
\node (n5) at (-2.5, -0.4) {B{\"u}chi Automata};
\node (n6) at (-2.5, -0.8) {$\mathcal{A}_{0 \cdots i}$};
\node (n7) at (-3.05, 0.1) {LTL Specs};
\draw (-2.5,-1.1) node[inner sep=0pt, minimum size=0pt](pt_b) {};

\draw [fill=yellow!50!white, draw=none] (-3.75,-2.7) -- (-3.75,-1.7) -- (-1.25, -1.7) -- (-1.25,-2.7) -- cycle;
\node (n8) at (-2.5, -2) {Product Automata};
\node (n9) at (-2.5, -2.4) {$ \mathcal{T} \otimes \mathcal{A}_{0 \cdots i} = \mathcal{P}_{0 \cdots i}$};
\draw (-2.5,-1.7) node[inner sep=0pt, minimum size=0pt](pt_p) {};
\draw (-3.75,-2.2) node[inner sep=0pt, minimum size=0pt](pt_pl) {};
\draw (-4,-2.2) node[inner sep=0pt, minimum size=0pt](pt_pl2) {};
\draw (-1.25,-2.2) node[inner sep=0pt, minimum size=0pt](pt_pr) {};
\draw (-0.75,-2.2) node[inner sep=0pt, minimum size=0pt](pt_pr2) {};
\draw (1.2,-2.3) -- (1.2,2) -- (4.2, 2) -- (4.2,-2.3) -- cycle;
\node (n10) at (2.65, 2.3) {\textbf{Hypothesis Evaluation}};

\draw [fill=blue!20!white, draw=none] (1.6,0.75) -- (1.6,1.55) -- (3.8, 1.55) -- (3.8,0.75) -- cycle;
\node (n10) at (2.7, 1.15) {Observation $\mathbf{x}_t$};
\draw (2.7,0.75) node[inner sep=0pt, minimum size=0pt](pt_ob) {};

\draw [fill=orange!20!white, draw=none] (1.45,0.3) -- (1.45,-0.7) -- (3.95, -0.7) -- (3.95,0.3) -- cycle;
\node (n10) at (2.7, 0) {Bayesian Inference};
\node (n10) at (2.7, -0.4) {$\mathbb{P}(\varphi_{0 \cdots i}|\mathbf{x}_t)$};
\draw (1.45,0) node[inner sep=0pt, minimum size=0pt](pt_ba) {};
\draw (2.7,0.3) node[inner sep=0pt, minimum size=0pt](pt_ba_up) {};
\draw (3.4,-0.7) node[inner sep=0pt, minimum size=0pt](pt_ba_br) {};
\draw (1.8,-0.7) node[inner sep=0pt, minimum size=0pt](pt_ba_bl) {};

\node (n10) at (3.4, -1.5) {Posterior};
\draw (3.4,-1.3) node[inner sep=0pt, minimum size=0pt](pt_post) {};
\draw (2.7,-1.5) node[inner sep=0pt, minimum size=0pt](pt_post_l) {};
\draw (3.4,-1.7) node[inner sep=0pt, minimum size=0pt](pt_post_down) {};
\node (n10) at (1.8, -1.3) {Post.};
\node (n10) at (1.8, -1.7) {Update};
\draw (1.8,-1.1) node[inner sep=0pt, minimum size=0pt](pt_po) {};
\draw (2.4,-1.5) node[inner sep=0pt, minimum size=0pt](pt_po_r) {};
\node (n10) at (3, -2.8) {Predict positions};
\draw (3.4,-2.5) node[inner sep=0pt, minimum size=0pt](pt_pre) {};
\draw [fill=blue!20!white, dashed] (-0.5,-2) -- (-0.5,1) -- (0.7, 1) -- (0.7,-2) -- cycle;
\node (n13) at (0.1, 1.4) {\textbf{Hypotheses}};
\node (n14) at (0.1, 0.5) {$\varphi_0:\mathcal{P}_0$};
\node (n14) at (0.1, 0) {$\varphi_1:\mathcal{P}_1$};
\node (n14) at (0.1, -0.5) {$\vdots$};
\node (n14) at (0.1, -1.3) {$\varphi_i:\mathcal{P}_i$};
\draw (-0.5,0) node[inner sep=0pt, minimum size=0pt](pt_set) {};
\draw (-0.75,0) node[inner sep=0pt, minimum size=0pt](pt_set2) {};
\draw (0.7,0) node[inner sep=0pt, minimum size=0pt](pt_set_r) {};

\path[->] (pt_b) edge (pt_p)
(pt_t) edge[-] (pt_tl)
(pt_tl) edge[-] (pt_pl2)
(pt_pl2) edge (pt_pl)
(pt_set2) edge (pt_set)
(pt_pr) edge[-] (pt_pr2)
(pt_set2) edge[-] (pt_pr2)
(pt_set_r) edge (pt_ba)
(pt_ob) edge (pt_ba_up)
(pt_ba_br) edge (pt_post)
(pt_po) edge (pt_ba_bl)
(pt_po_r) edge[-] (pt_post_l)
(pt_post_down) edge (pt_pre);
\end{tikzpicture}
\end{center}
\caption{Diagram of the Bayesian intent inference framework}\label{fig:framework}
\end{figure}
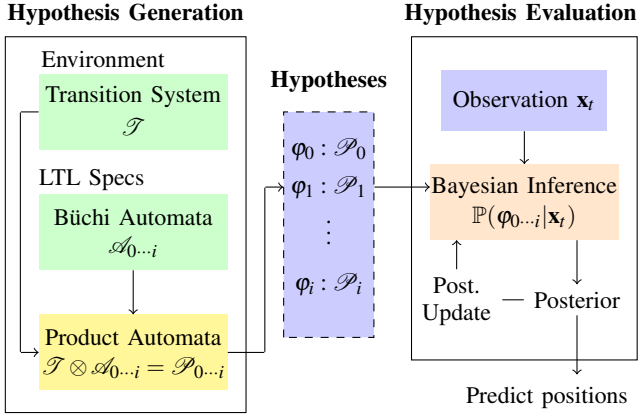

\subsection{Hypothesis Generation}\label{sec:hypo_gen}

Hypothesis generation is achieved using temporal logic \emph{specification
  patterns} that have been explored in previous works (Cf.
~\cite{humphrey2014formal,fainekos2009temporal}). Such patterns specify temporal
logic formulae with ``holes'' that can be filled in with atomic propositions.
Each such pattern defines a set of formulas obtained by substituting all
possible atomic propositions of interest for each hole. To avoid potentially
vacuous or inconsistent intents, we may further require that the same atomic
proposition not be used in two distinct holes for a given template.

\begin{example}\label{Ex:intent-patterns}
We list some commonly encountered patterns of interest below. We substitute an
atomic proposition in the place of a hole denoted by ``$\hole$'', ensuring that
the same proposition does not appear in more than one hole.

\begin{itemize}
\item \emph{Avoid Region}: $\square \neg \hole$
\item \emph{Cover Region}: $\lozenge \hole$
\item \emph{First and Then Second Region}: $\lozenge \left(\hole \wedge \lozenge \left(\hole  \right)\right)$
\item \emph{Reach While Avoid}: $\lozenge \hole \wedge \square \neg \hole$
\end{itemize}
\end{example}

As a result, each pattern can be expanded out into a set of LTL formulae that represent possible intents
of the agent.

\subsection{Temporal Logic and B\"uchi Automata}

We recall the standard connection between temporal logics and automata on
infinite strings, specifically B\"uchi automata~\cite{Wolper/2002/Constructing,Thomas/1990/Automata}.
Let $\varphi$ be a temporal logic formula over atomic propositions in $\Pi$.
Recall such a formula can be encoded as a nondeterministic B\"uchi automaton.

\begin{definition}[B\"uchi Automaton]
  A B\"uchi automaton $\mathcal{A}$ is a tuple $(Q, \Pi, E, q_0, F)$ wherein $Q$
  is a finite set of states; $\Pi$ is a finite set of atomic propositions; $E
  \subseteq Q \times \Pi \times Q$ is a set of transitions, wherein each
  transition $(q_i, \pi, q_j)$ indicates the transition from state $q_i$ to
  $q_j$ upon observing atomic proposition $\pi$; $q_0$ is an initial state and
  $F$ is the set of accepting state.
\end{definition}

Given an infinite sequence of atomic propositions $\pi_0, \pi_1, \pi_2, \ldots$,
a run of the automaton is an infinite sequence of states $q_0, q_1, q_2,
\ldots$, such that $q_0$ is the initial state and $(q_i, \pi_i, q_{i+1}) \in E$
for all $i \geq 0$. Finally, a run is accepting iff it visits an accepting state
$q \in F$ infinitely often. It is well-known that every LTL formula can be
translated into a B\"uchi
automaton~\cite{Manna+Pnueli/92/Temporal,Baier+Katoen/2008/Principles}. The
problem of constructing a B{\"u}chi automaton from a LTL specification has been
widely studied~\cite{gastin2001fast} with numerous tools such as
SPOT~\cite{duret.16.atva2}.

\paragraph{Safety/Guarantee Formulas and Automata}

In this paper, we focus on a very specific class of safety and guarantee
formulas, originally introduced by Manna \& Pnueli as part of a larger
classification of all LTL formulas~\cite{Manna+Pnueli/1989/Hierarchy}. Briefly,
safety formulas can be written using the $\square$ operator with negations
appearing only in front of atomic propositions, whereas guarantee formulas are
written using the $\lozenge$ operator with negations appearing only in front of
atomic propositions.

\begin{example}
  Going back to the Example~\ref{Ex:intent-patterns}, we note that the ``avoid
  regions'' pattern is a safety formula, whereas the ``cover regions'' and
  ``temporal sequencing'' patterns are guarantee formulas. Note that the
  coverage with the safety pattern is the conjunction of a guarantee sub-formula
  (involving $\lozenge$) and a safety sub-formula (involving $\square$).
\end{example}

\noindent\textbf{Assumption:} We will assume that any hypothesis being
considered can be written as
\begin{equation}\label{eq:formula-pattern}
  \left(\bigwedge_{i=1}^{M}\ \square \neg \pi_{s,i}\right)\  \land \ \left( \bigwedge_{j=1}^{N}\ \lozenge \pi_{g,j}\right) \,,
\end{equation}
wherein $A: \{ \pi_{s,1}, \ldots, \pi_{s,M}\}$ is disjoint from $B: \{
\pi_{g,1}, \ldots, \pi_{g,N} \}$, and $N > 1$ (i.e, $B \not= \emptyset$). Such a
formula represents the intent that the robot seeks to reach all regions labeled
by atomic propositions in the set $B$, in some order, while avoiding all regions
in A. More generally, however, our framework can accommodate the conjunction of
safety formulas and guarantee formulas. 

However, since our framework is \emph{probabilistic} it associates 
a measure of belief/probability with each hypothesis. Also, since our framework is \emph{dynamic}, these
probabilities  change over time. Thus, it is possible for our framework to implicitly infer
a more complex high  level objective that is not expressible in our restricted fragment of LTL. 
We will explore this aspect of our work further in the future.

We now consider a special type of B\"uchi automaton that we will call a
\emph{safety-guarantee} automaton.

\begin{definition}[Safety-Guarantee Automaton]
  A B\"uchi automaton is said to be a safety-guarantee automaton if the set of
  states $Q$ is partitioned into three mutually disjoint parts: $Q:\ Q_t \uplus
  F \uplus \{r \}$ wherein (a) the initial state $q_0 \in Q_t \cup F$, (b) $Q_t$ is a set 
  of ``transient'' states such that no state in $Q_t$ is accepting; (c) $F$
  is the set of accepting states, and (d) $r$ is a special \emph{reject state}.
  Furthermore, the outgoing edges from each state in $F$ either take us to a
  state in $F$ or to the reject state $r$. Finally, all outgoing edges from $r$
  are self-loops back to $r$. Fig.~\ref{fig:safety-guarantee-aut} illustrates
   safety-guarantee automata.
\end{definition}

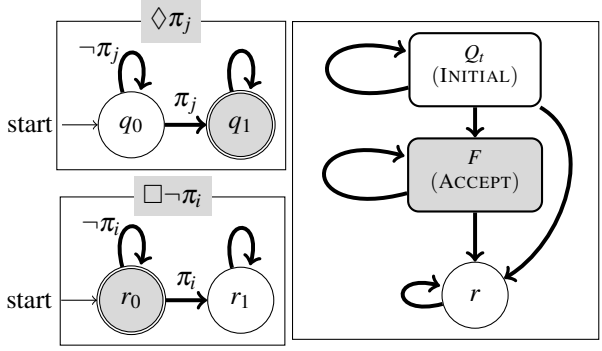
\begin{figure}[t]
  \begin{center}
    \begin{tikzpicture}
      \begin{scope}
        \matrix[row sep=40pt, column sep=15pt]{
          \node[state,initial](n0){$q_0$}; & \node[state,accepting,fill=gray!30](n1){$q_1$};\\
          \node[state,initial,accepting,fill=gray!30](m0){$r_0$}; & \node[state](m1){$r_1$};\\
        };
        \path[->, line width=1.5pt] (n0) edge [loop above] node[left](n2){$\neg \pi_j$}(n0)
        (n0) edge node[above]{$\pi_j$} (n1)
        (n1) edge[ loop above] (n1)
        (m0) edge [loop above] node[left](m2){$\neg \pi_i$}(m0)
        (m0) edge node[above]{$\pi_i$} (m1)
        (m1) edge[ loop above] (m1);
      \end{scope}
    \begin{scope}[xshift=4.5cm,yshift=1.9cm]
      \node[rectangle, draw=black, thick, rounded corners](pp0) at (0,0) {\footnotesize $\begin{array}{c} Q_t \\ (\textsc{Initial})\\[3pt] \end{array}$};
      \node[rectangle, draw=black, thick, rounded corners,fill=gray!30](pp1) at (0,-1.4) {\footnotesize $\begin{array}{c}  F \\ (\textsc{Accept})\\[3pt] \end{array}$};

      \node[state](pp2) at (0,-3) {$r$};

      \path[->, line width=1.5pt](pp0) edge (pp1)
      (pp1) edge (pp2)
      (pp1) edge[loop left] node[left](pp3){} (pp1)
      (pp0) edge[loop left] node[left](pp4){} (pp0)
      (pp0) edge[out=330, in=30] node[right](pp5){} (pp2)
      (pp2) edge[loop left] (pp2);

    \end{scope}
    \begin{scope}[on background layer]
      \node[fit=(n0)(n1)(n2),inner sep=4pt, draw=black, rectangle](nn0){};
      \node[rectangle,fill=gray!30] at (nn0.north) {$\lozenge \pi_j$};
      \node[fit=(m0)(m1)(m2),inner sep=4pt, draw=black, rectangle](nn1){};
      \node[rectangle,fill=gray!30] at (nn1.north) {$\square \neg \pi_i$};
      \node[fit=(pp0)(pp1)(pp2)(pp3)(pp5), inner sep=4pt, rectangle,draw=black](nn2){};
    \end{scope}
    \end{tikzpicture}
  \end{center}

  \caption{\textbf{(left)} B\"uchi automata for $\lozenge \pi_j$ and $\square \neg \pi_i$; and \textbf{(right)} Overall structure
    of a safety-guarantee automaton.}\label{fig:safety-guarantee-aut}

\end{figure}

\begin{lemma}
  A formula that satisfies the pattern in Eq.~\eqref{eq:formula-pattern} is
  represented by a safety-guarantee B\"uchi automaton.
\end{lemma}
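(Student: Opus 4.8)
The plan is to exhibit the automaton explicitly and then read the required partition off its states. Write the hypothesis as $\varphi = \varphi_S \wedge \varphi_G$, where $\varphi_S := \bigwedge_{i=1}^{M} \square \neg \pi_{s,i} \equiv \square\big(\bigwedge_{i=1}^{M}\neg\pi_{s,i}\big)$ and $\varphi_G := \bigwedge_{j=1}^{N}\lozenge\pi_{g,j}$. I would build one B\"uchi automaton of a special shape for each of the two conjuncts, take their synchronous product, and finally collapse the ``rejecting'' states into one. For $\varphi_S$ I use the two-state automaton $\mathcal{A}_S$ with an accepting state $s$ carrying a self-loop on every letter satisfying $\bigwedge_i \neg\pi_{s,i}$, a non-accepting state $r_S$ with a self-loop on every letter, and an edge $s\to r_S$ on every letter in which some $\pi_{s,i}$ holds; its initial state is $s$. (If $M=0$ this degenerates to the single accepting state $s$; one then adds an unreachable $r_S$ so the partition below is still well defined.)

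For $\varphi_G$ I use the ``coverage'' construction whose states are the subsets $S\subseteq B$, $B:=\{\pi_{g,1},\dots,\pi_{g,N}\}$, recording which required propositions have been observed so far: on reading a letter $\ell$ it moves from $S$ to $S\cup\{\pi_{g,j}: \pi_{g,j}\in\ell\}$, the initial state is $\emptyset$, and the unique accepting state is $B$ itself, which is absorbing. A run of $\mathcal{A}_G$ visits $B$ infinitely often iff it ever reaches $B$ iff every $\pi_{g,j}$ holds at some position, i.e. iff the word models $\varphi_G$. Building $\mathcal{A}_G$ directly, rather than as a product of the $N$ two-state automata for the individual $\lozenge\pi_{g,j}$, is precisely what lets me avoid a generalized-B\"uchi round-robin counter when intersecting with $\mathcal{A}_S$: $\mathcal{A}_G$ already has a single absorbing accepting state.

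Next I form the synchronous product $\mathcal{A}_S\otimes\mathcal{A}_G$ and merge every state with $\mathcal{A}_S$-component $r_S$ into one state $r$ (sound because $r_S$ is absorbing in $\mathcal{A}_S$, so the $\mathcal{A}_G$-component is irrelevant once there). Set $F:=\{(s,B)\}$, $Q_t:=\{(s,S): S\subsetneq B\}$, and keep $r$; then $Q = Q_t\uplus F\uplus\{r\}$ is the desired partition. The defining clauses check out: (a) the initial state is $(s,\emptyset)$, which lies in $Q_t$ since $B\neq\emptyset$; (b) no state of $Q_t$ is accepting, by the choice of $F$; (c)--(d) every edge out of $(s,B)$ either returns to $(s,B)$ (on a letter in which all $\neg\pi_{s,i}$ are true) or goes to $r$ (on a letter violating some $\square\neg\pi_{s,i}$), and every edge out of $r$ is a self-loop.

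It then remains to argue that the product recognizes exactly the models of $\varphi$. By the standard intersection lemma the product accepts a word iff both $\mathcal{A}_S$ and $\mathcal{A}_G$ accept it; and using the structure above, a product run visits $F=\{(s,B)\}$ infinitely often iff it eventually enters $(s,B)$ and stays there forever --- since the $\mathcal{A}_G$-component grows monotonically and $r$ is a sink --- which holds iff the safety part is never violated and the coverage set eventually equals $B$, i.e. iff the word models $\varphi$. Thus the single B\"uchi condition ``$F$ visited infinitely often'' already expresses the conjunction of the two component conditions. The main obstacle I anticipate is exactly this acceptance bookkeeping: matching ``visits $F$ infinitely often'' with ``eventually reaches $(s,B)$ and is never sent to $r$'', justifying the merge of the $r_S$-states without altering the language, and observing that ``transient'' in the definition is meant loosely --- states of $Q_t$ may carry self-loops, but since they are non-accepting no accepting run can dwell in $Q_t$ forever, which is all the notion needs. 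A lighter alternative would be to invoke the Manna--Pnueli hierarchy directly ($\varphi_S$ is a safety formula, $\varphi_G$ a guarantee formula, and the conjunction of a safety and a guarantee formula has a canonical automaton of this shape), but the explicit product makes the partition $Q_t\uplus F\uplus\{r\}$ manifest and is what I would write up.
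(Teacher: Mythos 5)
Your proof is correct and follows essentially the same route as the paper's (sketch) proof: build the two-state component automata for each $\square\neg\pi_{s,i}$ and $\lozenge\pi_{g,j}$ conjunct, take their product with ``accepting iff all components accepting,'' and read off the partition $Q_t \uplus F \uplus \{r\}$ (your subset-construction for the guarantee part is just the product of the $N$ two-state coverage automata in disguise). The one place you go beyond the paper is in explicitly justifying why the naive product acceptance condition recognizes the intersection language without a generalized-B\"uchi counter --- namely that the accepting regions of the components are absorbing up to rejection --- a point the paper's sketch silently assumes; that is a worthwhile addition rather than a divergence.
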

\begin{proof}(Sketch)
  Note that such a formula is made up of a conjunction of $\square (\neg \pi_j)$
  and $\lozenge \pi_i$ subformulas whose automata are shown in
  Fig.~\ref{fig:safety-guarantee-aut} (left). The overall conjunction is
  represented by the product of these automata, wherein a product state is
  accepting iff each of the individual component states are accepting. The rest
  of the proof is completed by identifying the states in each partition to
  establish the overall safety-guarantee structure of the automaton.
\end{proof}

\paragraph{Significance of Safety-Guarantee Structure} We will briefly explain
why the overall structure of the automaton is important in our framework. Note
that temporal logic formulas are quite powerful in expressing a variety of
patterns that may include formulas such as $\square \lozenge \pi$ which states
that a cell satisfying the atomic proposition $\pi$ must be reached infinitely
often, or $\lozenge \square \pi$ which states that the robot will eventually
enter a region where $\pi$ holds and stay in that region forever. Natually, it
is \emph{impossible} us to infer that such an intent holds or otherwise by
observing any finite sequence of cells, no matter how long such a sequence may
be. For instance, a robot intending to visit a region infinitely often may take
a long time before its first visit to such a region since there are infinitely
many steps ahead in the future. In this regard, the safety-guarantee structure
allows the robot to signal its likely intent using a finite sequence: a robot
intending to satisfy an intent can signal this in finitely many steps by
reaching an accepting state in $F$. Likewise, a violation can also be seen in
finitely many steps by reaching the reject state $r$. 

\paragraph{Product Automaton}
We define the Cartesian product  between a
weighted transition system $\scr{T}$ defining the workspace  and a
B\"uchi automaton $\scr{A}$.

\begin{definition}[Product Transition System]
  The product automaton $ \mathcal{T} \otimes \mathcal{A}$ is defined as the
  tuple: $(S, \delta, \hat{F}, \hat{\omega})$:
\begin{compactenum}
  \item $S :\ C \times Q$ is the
  Cartesian product of the set of cells in $\scr{T}$ and states in $\scr{A}$;
  \item $\delta \subseteq S \times S $ is a transition relation s.t. $((c_i, q_i),
  (c_j, q_j)) \in \delta$ iff  $(c_i, c_j)\in R$ and $(q_i, \pi_k, q_j) \in E$
    for some $\pi_k \in L(c_i)$;
   \item $\hat{F}:\ C \times F$ is the set of accepting
     states, and
   \item $\hat{\omega}((c_i, q_i), (c_j, q_j))$ is a weight function that
  is set to be equal to $\omega(c_i, c_j)$ if $((c_i, q_i), (c_j, q_j)) \in
  \delta$
  \end{compactenum}
\end{definition}

The product automaton models all the ``joint'' moves that can be made by a copy
of the automaton $\scr{A}$ in conjunction with a transition system $\scr{T}$,
wherein the atomic propositions labeling each cell in $\scr{T}$ governs the
possible enabled edges in the automaton $\scr{A}$.

\subsection{Cost of Formula Satisfaction}

Let $\scr{T}$ be a weighted transition system describing the workspace of the
robot and $\psi$ be a formula that follows the pattern in
Eq.~\eqref{eq:formula-pattern}, and described by a safety-guarantee automaton
$\scr{A}_{\psi}$. For a given state $\vx_t$ of $\scr{T}$, we define the cost of
satisfaction: $\cost(\vx_t, \varphi)$ as the shortest path cost for a path in
the transition system $\scr{T}$ whose atomic propositions satisfy the formula
$\varphi$. Formally, we define (and compute) $\cost(\vx_t, \varphi)$ using the
following steps:
\begin{enumerate}
\item Compute the product automaton $\scr{T} \otimes \scr{A}_{\psi}$.
\item Compute the shortest path cost from the product state $(\vx_t, q_0)$ to
  the set of accepting states $\hat{F}$ in the product automaton, wherein the
  cost of a path is given by the some of edge weights along the path.
\end{enumerate}

Note that the shortest cost path from a single product automaton state to a set
of accepting states is defined as the minimum among all possible shortest path
from the source to each element of the set. Since all edge weights are positive,
we can calculate the cost from each cell $\vx_t \in C$ to the set of accepting
states in time using Dijkstra's algorithm (single destination shortest path). To
handle a set of possible destination, we simply add a designated new destination
node and connect all accepting states to it using a 0 cost edge. This
calculation runs in time $O( (|\delta| + |S|) \log(|S|))$ wherein $|S| = |C|
\times |Q|$ is the number of states in the product automaton and $|\delta| = |R|
\times |E|$ denotes the number of edges.

\subsection{Bayesian Inference of Intent}

Let $\scr{H}:\{\varphi_1, \ldots, \varphi_n\}$ be
the set of hypothesized intents of the robot whose current cell is denoted by
$\vx_t \in C$. We will assume a prior probability distribution $\pi$ over $\scr{H}$
wherein $\pi(\varphi_j)$ denotes the prior probability over hypothesis formula
$\varphi_j$. Our initial prior starts out by assigning each hypothesis a uniform
probability. The posterior
from step $t-1$ forms the prior for step $t$ with some modifications.

At each step, we obtain an updated robot position $\vx_{t+1} \in C$ and use this fact to update the current distribution over $\scr{H}$. To do so, we require a model of robot decision making that determines the conditional probability $\Pp(\vx_{t+1}\ |\ \vx_t, \varphi)$:  the probability  given the intent $\varphi$ and 
current cell $\vx_t$, the robot moves to cell $\vx_{t+1}$. We will  make an assumption of \emph{Boltzmann noisy
  rationality}~\cite{ziebart2008maximum}.
\begin{figure*}[t]
\begin{tikzpicture}
\node (n0) at (0,0) {\includegraphics[width=0.98\textwidth]{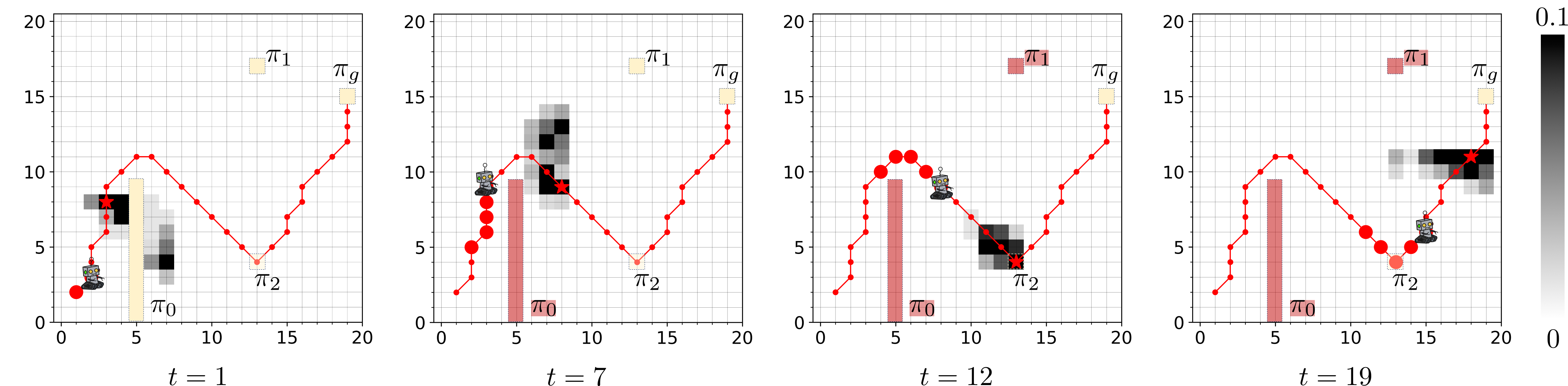}};
\end{tikzpicture}
\caption{Example scenario of predictive monitoring for the robot that has an underlying 
intent $\square \neg \pi_0 \wedge \lozenge \pi_2 \wedge \lozenge \pi_g$. 
The past five states (red circles) are used for Bayesian intent inference and our monitor
computes a distribution of future states (gray). The future trajectory is shown using small dots
and the ground truth of the future state is represented by red stars. As the robot navigates on the map, 
the monitor found $\pi_0$ and $pi_1$ are not a part of goals.}\label{fig:example}
\end{figure*}

\paragraph{Boltzmann Noisy Rationality Model}
Let $\nx(\vx_t)$ denote all the neighboring cells to $\vx_t$. We assume that
for each cell $c \in \nx(\vx_t)$ the probability of moving to $c$ is proportional an exponential of the
sum of the cost of moving from $\vx_t$ to $c$ and the cost of achieving the goal from $c$.
\[ \Pp(c | \vx_t, \varphi_j) \ \propto\ \exp\left( -\beta ( \omega(\vx_t, c) + \cost(c, \varphi_j)) \right) \,, \]
wherein $\beta$ is a chosen positive number that represents the rationality. For $\beta =0$, the robot's
choice is just a uniform choice between all available moves regardless of the intent. However as
$\beta \rightarrow \infty$, the agent simply chooses the optimal edge along the shortest cost path. 
With the appropriate normalizing constant, we note that

\begin{equation}\label{eq:boltzmann}
  \Pp(\vx_{t+1} | \vx_t, \varphi_j) = \frac{\exp\left( -\beta\ ( \omega(\vx_t, \vx_{t+1}) + \cost(\vx_{t+1}, \varphi_j) ) \right)}{ \sum_{c \in \nx(\vx_t)} \exp\left( -\beta\  ( \omega(\vx_t, c) + \cost(c, \varphi_j) ) \right) } \,.
  \end{equation}

Using Bayes rule, we can now compute the (unnormalized) posterior likelihood as follows:
\begin{equation}\label{eq:posterior-likelihood}
  \Pp(\varphi_j\ |\vx_t, \vx_{t+1})\ \propto\ \Pp(\vx_{t+1}\ |\ \vx_t, \varphi_j)\ \times\ \pi(\varphi_j) \,.
\end{equation}
The posterior probability is calculated by normalizing this over all
hypothesized intents $\varphi \in \scr{H}$.
We will recursively update the prior at each step to yield the posterior at the next step. However, it is
often useful to capture a change in the intent at each step by means of an ``$\epsilon$-transition'':
\begin{equation}\label{eq:epsilon-transition}
  \pi_{t+1} (\varphi_j) = ( 1- \epsilon) \Pp(\varphi_j\ |\ \vx_t, \vx_{t+1}) + \epsilon \frac{1}{|\scr{H}|} \,.
\end{equation}
The so-called $\epsilon$ transition simply weights down the posterior by a
factor $1 - \epsilon$ and adds a uniform probability distribution with a
constant weight $\epsilon$.  This method allows us to quickly capture the new intent when an agent changes its intent during the operation.  In our experiments, we fix $\epsilon = 0.3$.

\subsection{Posterior Predictive Distribution}

Given the current posterior $\Pp(\varphi_j | \vx_t, \vx_{t+1})$ computed using
Eq.~\eqref{eq:posterior-likelihood}, we wish to forecast the future position of
the robot. We model the movement of the robot as a stochastic process:
\begin{compactenum}
\item Let initial position be $\vx_{t+1}$ and the initial intent distribution be given by the distribution
  $\pi_{t+1}$ from Eq.~\eqref{eq:epsilon-transition}.
\item At  time $t = t +k$, update the current intent distribution:
  $\pi_{t+k+1} = (1- \epsilon) \pi_{t+k} + \epsilon\ \mathsf{Uniform}(\scr{H})$,
  wherein $\mathsf{Uniform}(\scr{H})$ represents a uniform distribution over the elements of the finite set $\scr{H}$.
\item Sample an intent $\varphi_j$ from $\pi_{t+k+1}$.
\item Sample $\vx_{t+k+1}$ from the distribution $\Pp(\vx_{t+k+1} | \vx_{t+k}, \varphi_j)$ according to Eq.~\eqref{eq:boltzmann}.
\end{compactenum}

Using the procedure above, we obtain samples of potential future trajectories of
the robot. We can use these trajectories to predict the possible future
positions at a future time $t+T$. Notice, however, that our model implicitly
assumes that the robot's intents change arbitrarily and are chosen afresh at
each step according to the posterior distribution.

\paragraph{Example Scenario}

Fig.~\ref{fig:example} shows an example scenario of a robot in a workspace with
four distinct regions labeled with atomic propositions $\pi_0, \pi_1, \pi_2$ and
$\pi_g$ as shown by the highlighted rectangles in the figure. The underlying
(ground truth) intent is to avoid the region $\pi_0$, visit regions $\pi_2$ and
$\pi_g$. The path taken by the robot is shown using the red circles whereas the
predicted future distribution is shown using various shades of gray, the darker
shade representing a higher probability. The ground truth future position is
shown using the red star.

At time $t=1$, having observed just two data points, the intent to avoid $\pi_0$
is guessed by our monitor.
However, at time $t=7$, the robot is seemingly unable to distinguish between the
competing goals of reaching/avoiding $\pi_1, \pi_g$ and $\pi_2$. However, at this
time, the monitor predicts a
right turn with a high probability though the robot's direction of travel would
indicate that it continues moving in a straight line in the positive $y$
direction.

At time $t=12$, we see that the robot's direction of travel makes the intent to
reach $\pi_2$ clear. Similarly, at time $t=19$, we see that the goal of reaching
$\pi_1$ or that of $\pi_g$ are considered likely with $\pi_g$ being seen as more
likely to be the robot's intended target.

Thus, we see how the robot's future positions are predicted accurately by our
monitor even though the set of possible intents are restricted to simple
safety-guarantee formulas.


\section{Experimental Results}\label{sec:experiments}
In this section, we evaluate the performance of our monitoring approach on two
datasets -- a synthetic data set generated by a Probabilistic Roadmap (PRM)  motion planning algorithm
that was used to plan paths satisfying randomly generated ground truth ``intents'',
and the TH{\"O}R human trajectory dataset~\cite{thorDataset2019}. Using these, we will answer the
following questions: (Q1) How does the prediction accuracy change as a prediction horizon increases?
and (Q2) How does the computation time
depend on the prediction time horizon?

Each evaluation is performed over a map with $K$ distinguished regions marked by atomic
propositions $\pi_1, \ldots, \pi_K$. The hypothesized intents consist of $2^K$ formulas, each of the
form $\bigwedge_{j \in A} \lozenge \pi_j\ \land\ \bigwedge_{i \in B} \square \neg \pi_i$ wherein
$A \cap B = \emptyset$ and $ A \cup B = \{ 1, \ldots, K\}$. 
Each evaluation consists of a path followed by the robot wherein the monitor
predicts the probability distribution of the positions $5, 10$ and $15$ steps
ahead. A prediction is deemed correct if the actual ground-truth
position is predicted by our monitor as having a probability $\geq 0.01$.

The trajectories are generated using two approaches: 

\noindent\textbf{PRM trajectories:} We use a  map with $N \times N$ grid cells and place $K$ random regions on the map.
Next, we mark a randomly chosen subset of these regions as obstacles and select
the remaining regions as targets. We use the PRM motion planner off the shelf to
generate a plan that may not necessarily be optimal, but is often close to being
optimal. Our experiments vary $N \in \{ 20, 50, 100 \}$ and $K \in \{3, 5\}$.

\noindent\textbf{TH\"OR Human Motion Dataset:}
This publicly available dataset includes multiple human trajectory data recorded
in a room of $8.4 \times 18.8$ meters~\cite{thorDataset2019}. The workspace
consists of five goal locations around the room and one obstacle in the middle.
Participants navigate between goals while avoiding the obstacle. To use this
dataset for our monitor, we converted the workspace to a $50 \times 50$ grid map,
and discretized human trajectory data. Among all trajectories, we selected $277$
segments at random for our evaluation.

\begin{figure}[t]
\begin{center}
  \begin{tabular}{cc}
\begin{tikzpicture}
\node (n0) at (0,0)
    {\includegraphics[width=.47\textwidth]{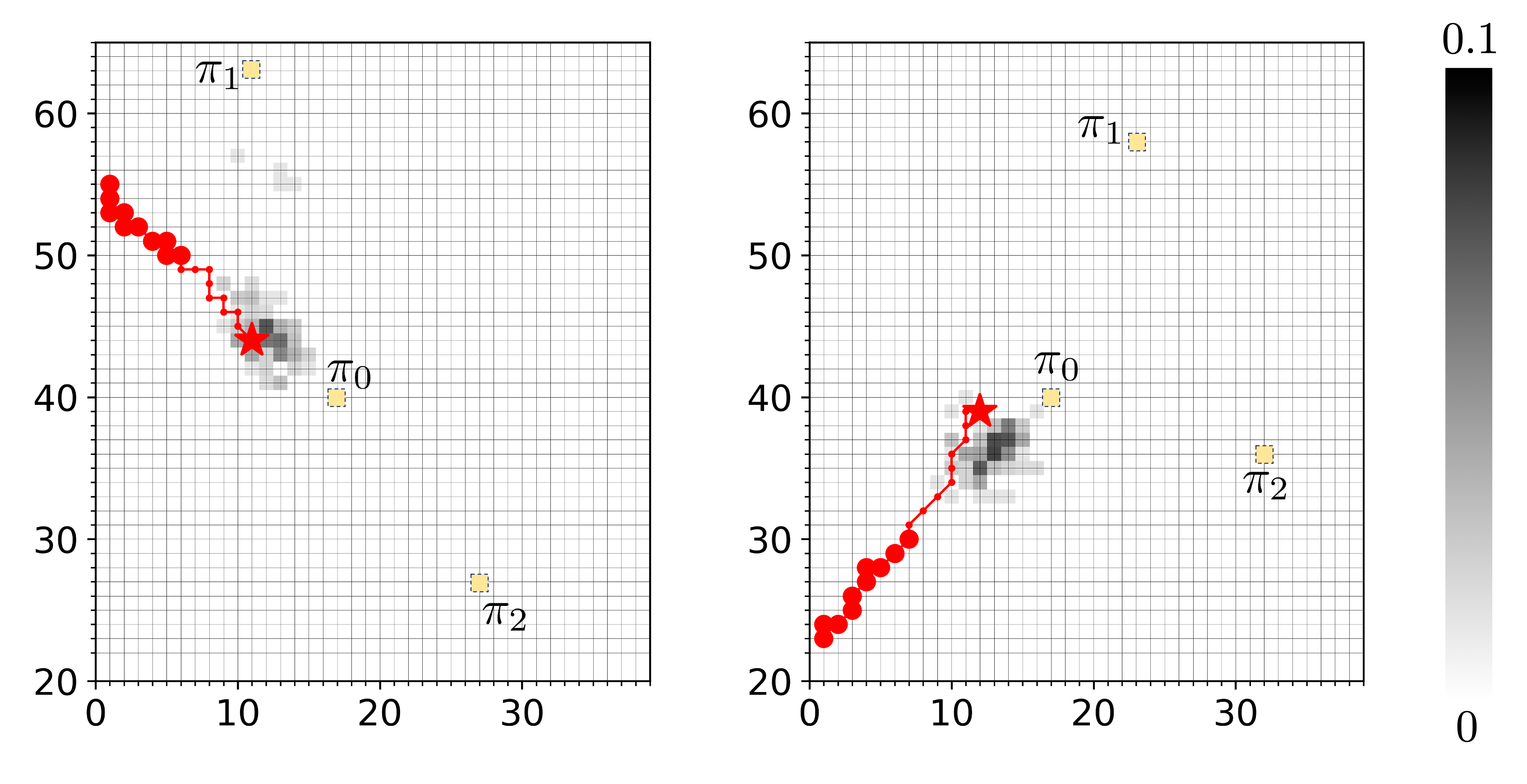}};
\end{tikzpicture}
  \end{tabular}
\end{center}
\caption{Example of the evaluation setup showing  goal regions labeled by atomic
propositions and an underlying intent that seeks some subset of the
regions/avoids the rest. The revealed trajectory is shown using red circles
while the future trajectory is shown using dotted lines. Predicted distribution
of future states is shown in various shades of gray with darker shades
representing higher probabilities.}\label{fig:prm}
\end{figure}

\begin{figure}[t]
\begin{center}
  \begin{tabular}{cc}
\begin{tikzpicture}
\node (n0) at (0,0)
    {\includegraphics[width=.44\textwidth]{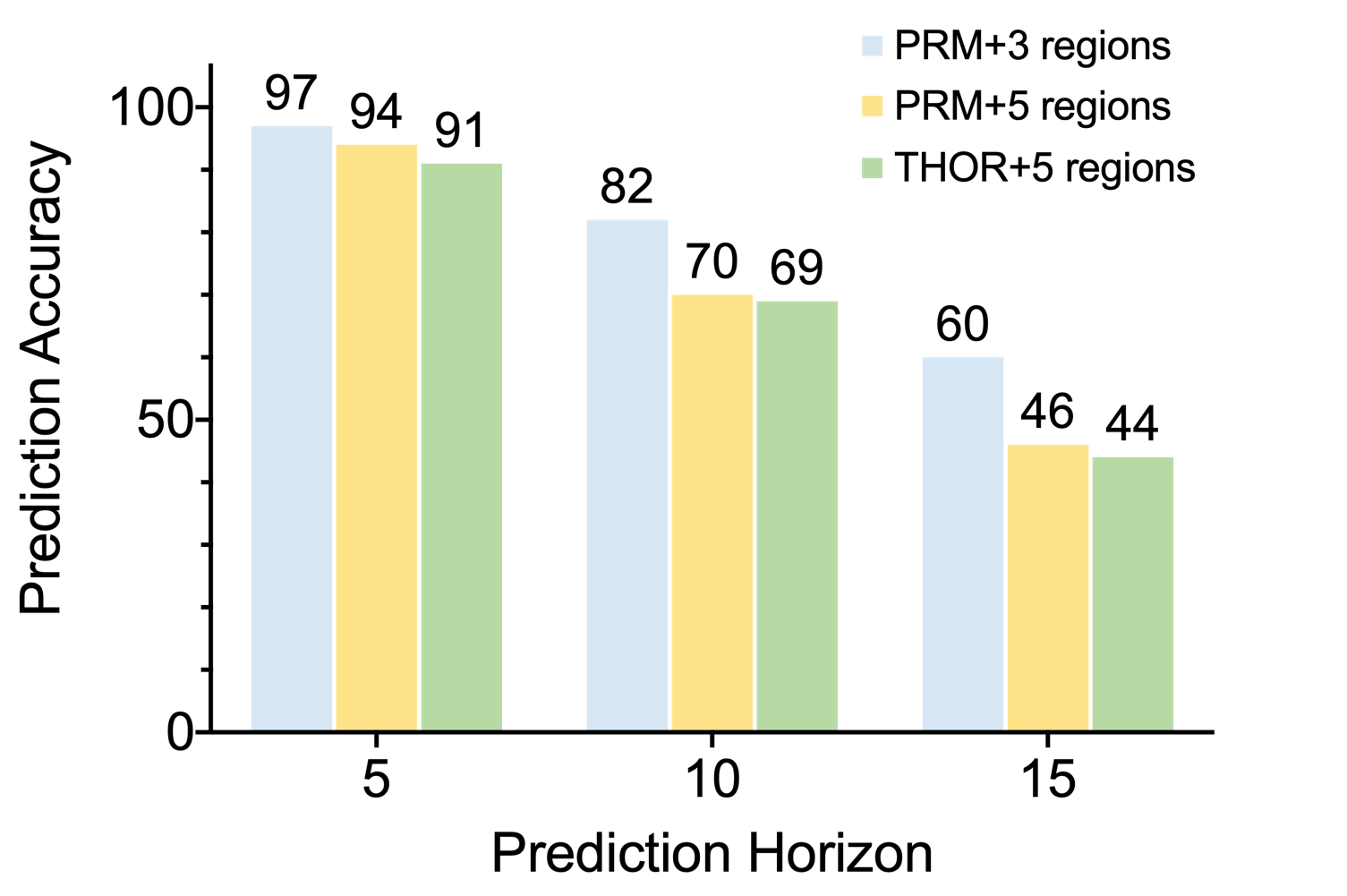}};
\end{tikzpicture}
  \end{tabular}
\end{center}
\caption{Prediction accuracy tested on various settings.}\label{fig:accuracy}
\end{figure}

\subsection{Evaluation of Accuracy and Computation Time}
Fig.~\ref{fig:accuracy} shows that the prediction accuracy for the various
datasets under varying prediction horizons. We note that the performance
degrades as the prediction horizon grows, as expected. Nevertheless, our
approach continues to provide useful information nearly $70\%$ of the time on
prediction horizons that are $10$ steps ahead. Furthermore, this accuracy can be
increased further by considering correlations of intents over time which is
currently not performed in our approach. We also note that the accuracy degrades
when more atomic propositions are considered and thus more hypothesized intents
are available. Finally, we notice that the accuracy for the real-life human
trajectory dataset is comparable with that of the synthesized PRM dataset. This
partly validates the rationality hypothesis that
underlies our work.

Next, we consider an evaluation of the computation time. The computational
complexity of our approach dependes on the size of product automata and the
number of hypotheses. Table~\ref{tab:computation-time} reports the computation
time for constructing product automata, checking $32$ intents, and Monte-Carlo
simulations for various map sizes. We note that
the computation times remain small even for a $100 \times 100$ grid and $32$ intents
each with $32$ B\"uchi automaton states. 

\begin{table}[t]
\caption {Computation time for our approach, as recorded 
on a MacBook Pro with 2.6 GHz Intel Core i7 and 16 GB RAM.
} \label{tab:computation-time}
\centering
\begin{tabular}{c|ccc}
\hline
Map size                                                                                                                       & 20$\times$20 & 50$\times$50 & 100$\times$100 \\ \hline
\begin{tabular}[c]{@{}c@{}}Product Automaton Construction\\ (32 states in a B{\"u}chi automaton)\end{tabular} & 0.07         & 0.23         & 0.75           \\ \hline
\begin{tabular}[c]{@{}c@{}}Bayesian Intent Inference\\ with 32 hypotheses\end{tabular}                                         & 0.16         & 0.56         & 2.13           \\ \hline
\begin{tabular}[c]{@{}c@{}}300 Monte-Carlo Simulations\\ (5 / 10 / 15 steps)\end{tabular}                                      & \multicolumn{3}{c}{0.28 / 0.55 / 0.81}      \\ \hline
\end{tabular}
\end{table}

\section{Conclusion}
Thus, we have demonstrated a framework for inferring intents and predicting likely future positions of robots. Our framework can be extended in many ways including richer set of
intents,  alternative assumptions on how intents change over time, incorporating richer agent dynamics, maps with time-varying regions of interest, alternatives to the noisy rationality model considered, and finally, intents governing multiple agents. We propose to study these problems using the rich framework of logic, automata and games combined with fundamental insights from Bayesian inference and machine learning.

\section*{ACKNOWLEDGMENTS}
 This work was funded in part by the US National Science
Foundation (NSF) under award numbers 1815983, 1836900 and the NSF/IUCRC Center for Unmanned Aerial Systems (C-UAS).  The authors thank Prof. Morteza Lahijanian for helpful discussions.


\end{document}